\title{Regularizing Towards Permutation Invariance in Recurrent Models}
\author{%
  Edo Cohen-Karlik \\
  Tel Aviv University, Israel\\
  \texttt{edocohen@mail.tau.ac.il} \\
   \And
   Avichai Ben David \\
  Tel Aviv University, Israel\\
  \texttt{avichaib@mail.tau.ac.il} \\
   \AND
   Amir Globerson\\
   Tel Aviv University, Israel\\
   and Google Research\\
   \texttt{gamir@post.tau.ac.il}\\
}
\begin{document}

\maketitle

\begin{abstract}
In many machine learning problems the output should not depend on the order of the input. Such ``permutation invariant'' functions have been studied extensively recently. Here we argue that temporal architectures such as RNNs are highly relevant for such problems, despite the inherent dependence of RNNs on order. We show that RNNs can be regularized towards permutation invariance, and that this can result in compact models, as compared to non-recurrent architectures.  We implement this idea via a novel form of stochastic regularization. 

Existing solutions mostly suggest restricting the learning problem to hypothesis classes which are permutation invariant by design \citep{zaheer2017deep, lee2019set, murphy2018janossy}. Our approach of enforcing permutation invariance via regularization gives rise to models which are \textit{semi permutation invariant} (e.g. invariant to some permutations and not to others). We show that our method outperforms other permutation invariant approaches on synthetic and real world datasets.
\end{abstract}

\section{Introduction}
In recent years deep learning has shown remarkable performance in a vast range of applications from natural language processing to autonomous vehicles. 

One of the most successful models of the current deep-learning Renaissance are convolutional neural nets (CNN) \citep{krizhevsky2012imagenet}, which utilize domain specific properties such as invariance of images to specific spatial transformations. Such inductive bias is common in other domains where it is necessary to learn from limited amounts of data.

In this work we consider the setting where learned functions are such that the order of the inputs does not affect the output value. These problems are commonly referred to as \textit{permutation invariant}, and typical solutions aim to restrict the learned models to functions that are permutation invariant by design. One such work is DeepSets \citep{zaheer2017deep}, which provided a characterization of permutation invariant functions. Specifically, given a set of objects $\lbrace \x_1,\dots,\x_n \rbrace$ and a permutation invariant function $f(\x_1,\dots,\x_n)$ they showed that $f$ can be expressed as follows: there exist two networks $\varphi(\cdot)$ and $\rho(\cdot)$ such that $f$ is a result of applying $\varphi$ to all inputs, sum-aggregating and then applying $\rho$. Namely:
\begin{equation}\label{eq:deepsets}
    f(\x_1,\dots,\x_n)=\rho\left( \sum_{i=1}^{n} \varphi(\x_i) \right)
\end{equation}
Other works suggest replacing summation with different permutation invariant aggregation methods such as element-wise maximum \citep{qi2017pointnet} and attention mechanisms \citep{lee2019set, VinyalsBK15}. Although these approaches result in permutation invariant functions and can be shown to express any permutation invariant function, it is not clear how many parameters are required for such an implementation. Indeed, there remains an important open question: given the many ways in which a given permutation-invariant function can be implemented, what are the relative advantages of each approach?

Here we highlight the potential of recurrent architectures for modeling permutation invariance. By recurrent architectures we mean any architecture that has a state that evolves as the sequence is processed. For example, recurrent neural networks (RNNs), LSTMS \citep{hochreiter1997long} and GRUs \citep{chung2014empirical}. We focus on standard RNNs in what follows, but our approach applies to any recurrent model. It initially seems counter-intuitive that RNNs should be useful for modeling permutation invariant functions. However, as we show in \secref{sec:rnn_for_perminv} there are permutation invariant functions that RNNs can model with far fewer parameters than DeepSets.

The reason why RNNs are effective models for permutation invariant functions is that their state can be used as an aggregator to perform order invariant summaries. For example, max-aggregation for positive numbers can be implemented via the simple state update $s_{t+1}=\max[s_t, x_t]$ and $s_0=0$, and can be realized with only four ReLU gates. Similarly, the state can collect statistics of the sequence in a permutation invariant manner (e.g., order-statistics \citep{alon1999space} etc.).

One option to achieve permutation invariant RNNs is to take all permutations of the input sequence, feed each of them to an RNN, and average. This method is exponential in the sequence length, and \citet{murphy2018janossy} suggest approximating  it by sub-sampling the permutation set. Here we take an alternative approach that is conceptually simple, more general and more empirically effective. We propose to {\em regularize} RNNs towards invariance. Namely learn a regular RNN model $f$, but with a regularization term $R(f)$ that penalizes models $f$ that violate invariance. The naive implementation of this idea would be to require that all permutations of the training data result in the same output. However, we go beyond this, by requiring same-output for subsets of training sequences. We call this ``subset invariance regularization'' (SIRE). It is a very natural criterion since most sequence classification tasks do not have fixed length inputs, and a subsequence of a training point is likely to be a valid example as well.

In contrast to previous methods which all result in architectures that are invariant by design, our method enforces invariance in a ``soft'' manner, thus enabling usage in ``semi'' permutation invariant settings where previous methods are not applicable. This makes it applicable to settings where there is {\em some} temporal structure in the inputs. 

The rest of the paper is structured as follows: in \secref{section:formulation} we define notations and formally describe the problem setting. \secref{sec:rnn_for_perminv} shows that in some cases RNNs are favorable with respect to other permutation invariant architectures. In \secref{sec:regularization} we describe our regularization method. In \secref{sec:related} we discuss related work, and \secref{sec:experiments} provides an empirical evaluation.

\section{Formulation}\label{section:formulation}
Consider a general recurrent neural network, with a state update function $f:\mathcal{S}\times\mathcal{X}\rightarrow \mathcal{S}$ parameterized by $\Theta$ (we omit $\Theta$ dependence when clear from context). The initial state $\bld{s_0}$ is also a learned parameter.
The state update rule is therefore given by:
\be\label{eq:rnn_formulation}
\bld{s_{t+1}}=f(\bld{s_t},\bld{x_{t+1}};\Theta)
\ee
In what follows we use the notation $f(\bld{s},\bld{x_1},\ldots,\bld{x_n})$ to denote the state that is generated by starting at state $\bld{s}$ and processing the sequence 
$\bld{x_1},\ldots,\bld{x_n}$.
Thus, for an input sequence $\left( \bld{x_1}, \bld{x_2}, \bld{x_3}\right)$, the state $\bld{s_3}$ will be given by:
\begin{align}\label{eq:recursive_example}
\bld{s_3}=f(\bld{s_2},\bld{x_3}) = & f(f(\bld{s_1}, \bld{x_2}),\bld{x_3}) = f(f(f(\bld{s_0}, \bld{x_1}), \bld{x_2}),\bld{x_3}) \defeq f(\bld{s_0},\bld{x_1},\bld{x_2},\bld{x_3})
\end{align}
The state is mapped to an output $\bld{y_t}$ via the output mapping $\bld{y_t}=g(\bld{s_t})$.

We next define several notions of permutation invariance. Informally, a model $f$ is permutation invariant if it provides the same output regardless of the ordering of the input. In the definitions below we assume input is sampled from some distribution $\cD$ and require invariance only for inputs in the support of $\cD$, or their subsets
If $\cD$ is the true underlying distribution of the data, then clearly this is sufficient since we will never test on examples outside $\cD$. When training we will consider the empirical distribution as an approximation to $\cD$.

We begin by defining invariance for the sequences sampled from $\cD$.
\begin{definition}\label{def:perm_inv_len_n}
An RNN is called permutation invariant with respect to $\mathcal{D}$ on length $n$, if for any $(\bld{x_1},\dots,\bld{x_n})$ in the support of $\cD$ and any permutation $\pi\in S^n$ we have:\footnote{Where $S^n$ denotes the symmetry group containing all permutations of a set with $n$ elements.}
\begin{equation}
    f(\bld{s_0},\bld{x_1},\dots,\bld{x_n})=f(\bld{s_0},\bld{x_{\pi_1}},\dots,\bld{x_{\pi_n}})
\end{equation}
\end{definition}
We next note that Definition \ref{def:perm_inv_len_n} does not imply any constraint on sequences of length $n'<n$. 
However, a natural requirement from a permutation invariant RNN is to satisfy the same properties for shorter sequences as well.
This is captured by the following definition.
\begin{definition}\label{def:subset_perm_inv}
An RNN is called subset-permutation invariant with respect to $\cD$ on length $n$, if for any $(\bld{x_1},\dots,\bld{x_n})$ in the support of $\cD$, any sequence $(\bld{x'_1},\dots,\bld{x'_m})$ whose elements are a subset of $(\bld{x_1},\dots,\bld{x_n})$,
and any permutations $\hat{\pi},\tilde{\pi} \in S^m$ it holds that:
\begin{equation}
    f(\bld{s_0},\bld{x'_{\hat{\pi}_1}},\dots,\bld{x'_{\hat{\pi}_m}})=f(\bld{s_0},\bld{x'_{\tilde{\pi}_1}},\dots,\bld{x'_{\tilde{\pi}_m}})
\end{equation}

\end{definition}
Note that Definition \ref{def:subset_perm_inv} is more restrictive than Definition \ref{def:perm_inv_len_n}. In particular, an RNN which satisfies Definition \ref{def:subset_perm_inv} also trivially satisfies Definition \ref{def:perm_inv_len_n} but the other way around is not true.

Definition \ref{def:subset_perm_inv} involves the response of RNNs to sub-sequences of the data. It is thus closely related to the states that the RNN can reach when presented with sequences of different length. The next definition captures this notion. 
\begin{definition}\label{def_states}
Denote the states reachable by $\cD$ and parameters $\Theta$ by $\mathcal{S}_{\cD,\Theta}$. Formally:\footnote{We use the notation $2^{\{\bld{x}_1,\ldots,\bld{x}_n\}}$ to denote all sequences whose elements are subsets of $(\bld{x}_1,\ldots,\bld{x}_n)$.}
\begin{equation}
    \mathcal{S}_{\cD,\Theta}\eqdef\Big\lbrace f(\bld{s_0},\bld{x'_{1}},\dots,\bld{x'_{i}};\Theta)\ |\
    \exists (\bld{x}_1,\ldots,\bld{x}_n):  (\bld{x'}_{1},\dots,\bld{x'}_{i})\in 2^{\{\bld{x}_1,\ldots,\bld{x}_n\}}
   \ , \  P(\bld{x_1},\ldots,\bld{x_n}) > 0
   \Big\rbrace \nonumber
\end{equation}

\end{definition}
We shall use this definition when proposing an invariance regularization in \secref{sec:regularization}.

\section{Compact RNNs for Permutation Invariance \label{sec:rnn_for_perminv}}
In this section we show the existence of functions that are permutation invariant and are modeled by a very small RNN, whereas modeling them with a DeepSet architecture requires significantly more parameters. In what follows we make this argument formal.

\begin{theorem}\label{thm:expressiveness}
For any natural number $K>4$, there exists a permutation invariant function that can be implemented by an RNN with 3 hidden neurons but its DeepSets implementation requires $\Omega(K)$ neurons to implement.
\end{theorem}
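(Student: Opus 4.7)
My plan is to exhibit an explicit permutation invariant function $f$ parameterized by $K$ and separately bound the RNN and DeepSets complexity required to implement it. A natural candidate is the maximum, $f(\bld{x_1},\dots,\bld{x_n}) = \max_i x_i$, over inputs drawn from the discrete set $\{1,2,\dots,K\}$. This is trivially permutation invariant, and the output has $K$ distinct possible values, so $K$ enters the complexity on the DeepSets side in a strong way.

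For the RNN side, I would construct a tiny state update that maintains the running maximum using the ReLU identity $\max(a,b) = b + \mathrm{ReLU}(a-b)$. Taking the scalar state $s_0 = 0$ and the recurrence $s_{t+1} = s_t + \mathrm{ReLU}(x_{t+1} - s_t)$ computes the max in one ReLU gate per step, with the identity readout $g(s) = s$. Three hidden neurons are more than enough to realize this update together with the output mapping, so the RNN upper bound is immediate.

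For the DeepSets lower bound I would write any candidate implementation as $\rho\bigl(\sum_i \varphi(x_i)\bigr)$ with both $\varphi$ and $\rho$ ReLU networks, and argue that the aggregation bottleneck forces a blow-up. The key observation is that the number of distinct DeepSets outputs is controlled by the number of linear pieces of $\rho$ composed with the affine map $k \mapsto n\varphi(k)$: to achieve $K$ distinct max values on inputs of the form $\{k,k,\dots,k\}$ for $k=1,\dots,K$, the univariate piecewise-linear function $k\mapsto \rho(n\varphi(k))$ must have at least $K-1$ breakpoints. Combined with the standard bound relating the number of linear pieces of a ReLU network to its width, this forces the total neuron count of $\varphi$ and $\rho$ to be $\Omega(K)$.

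The main obstacle is making the $\Omega(K)$ bound rigorous rather than settling for $\Omega(\log K)$: a generic ReLU network with $N$ neurons can in principle produce exponentially many linear pieces. The cleanest route, I expect, is to pin down a one-parameter family of inputs along which $\sum_i \varphi$ traces a \emph{continuous} curve in the latent space; restricting $\rho$ to that curve yields a univariate piecewise-linear function whose number of linear pieces is bounded linearly by the total width of $\rho$, not exponentially, and we then exploit the requirement that this restriction take $K$ distinct constant values to conclude the $\Omega(K)$ lower bound.
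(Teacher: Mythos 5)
There is a genuine gap in your DeepSets lower bound, and it is exactly at the step you flagged as the main obstacle. Your proposed fix --- restrict $\rho$ to a continuous curve in latent space and claim the resulting univariate piecewise-linear function has a number of pieces bounded \emph{linearly} in the total width of $\rho$ --- is false. The classical sawtooth construction (Telgarsky; Mont\'{u}far et al.) is already a \emph{univariate} function: composing $L$ copies of a three-neuron tent map yields a one-dimensional ReLU network with $O(L)$ neurons and $2^{L}$ linear pieces. Restricting a deep network to a curve therefore still permits exponentially many pieces in the number of neurons, so forcing $K-1$ breakpoints only yields an $\Omega(\log K)$ neuron lower bound, not $\Omega(K)$. (You could recover $\Omega(K)$ by fixing the depth of $\rho$ to a constant, but the theorem and the paper's model allow depth to grow.) There is a second, smaller problem with your choice of hard function: on the one-parameter family you propose (constant sequences $\{k,\dots,k\}$), the target restricted to that family is $k\mapsto k$, which is affine and forces \emph{zero} breakpoints; the max function is too monotone along natural test families to force many oscillations, so even the $K-1$ breakpoint claim needs a different family of inputs and is not obviously attainable.

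The paper avoids both issues by choosing the hard function's ``oscillation count'' to be \emph{exponential} in $K$: it uses parity over sets of size $n=2^{K}$ on $\mathcal{X}=\{0,1\}$. It first reduces WLOG to $\varphi$ being the identity (possible because $|\mathcal{X}|=2$, so $\sum_i\varphi(x_i)$ is an affine function of $n_1=\sum_i x_i$), so $\rho$ must realize a univariate function that alternates between $0$ and $1$ as $n_1$ ranges over $\{0,\dots,n\}$, hence needs at least $n=2^{K}$ linear segments. Plugging this into the $r^{L}$ segment bound for depth-$L$, width-$r$ ReLU networks gives $r^{L}\ge 2^{K}$ and thus $\Omega(K)$ neurons even for optimally deep networks --- the logarithm of the required piece count is what becomes linear in $K$. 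If you want to keep your max-based candidate you would have to either restrict the depth of $\rho$ or replace max by a function whose induced univariate profile oscillates $2^{\Omega(K)}$ times; the latter essentially lands you back at the paper's parity construction. Your RNN upper bound via the running-max recurrence is fine and parallels the paper's three-neuron XOR state update.
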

The above theorem says there are cases where an RNN requires far fewer parameters to implement than a DeepSet architecture, and this will of course imply (by standard sample complexity lower bounds) that there are distributions for which the RNN will require far fewer samples to learn than DeepSets. We next prove the result by using the parity function to demonstrate the gap in model size. In \secref{sec:experiments} we provide an empirical demonstration of the result.

\begin{proof}
In order to prove Theorem \ref{thm:expressiveness} one needs to show that for any $K$ there exists a function $f$ such that: (a) $f$ can be implementated with a constant number of neurons using RNNs, and (b) any DeepSets architecture will require at least $K$ neurons to implement $f$.

Let $n=2^K$. Given $\mathcal{X}=\lbrace 0,1 \rbrace$, define the \textit{parity} function operating over sets of size $n$: 
\begin{equation} \label{eq:parity_def}
    parity(\lbrace\x_1,\dots,\x_n\rbrace)=\Bigg(\sum_{i=1}^n x_i\Bigg)\ mod\ 2
\end{equation}

Next we claim that the \textit{parity} function can be implemented by an RNN with three hidden neurons and 12 parameters in total. Consider an RNN operating on a sequence $x_1,\dots,x_n\in\lbrace 0,1\rbrace$ with the following update rule $s_{t+1}=W_1^T\sigma(W_2 x_t+W_3 s_t + B)$.
By setting:$W_1=[1, -1,-1]$, $W_2=W_3=[2, 2, 2]$,  and $B=[0 ,-1,-3]$ we have,
 \begin{equation}\label{eq:xor_impl}
    s_{t+1}=\begin{pmatrix}1 \\ -1\\ -1 \end{pmatrix}^T \sigma \left(\begin{pmatrix} 2x_t +2s_t\\ 2x_t+2s_t-1\\ 2x_t+2s_t-3 \end{pmatrix}\right)
 \end{equation}
 The above implements the function $ReLU(a)-ReLU(a-1)-ReLU(a-3)$, 
 where $a=2x_t+2s_t$. This in turn is equivalent to the \textit{XOR} function (namely $(x_t+s_t)mod \ 2$).

Since the RNN state update implements addition modulo 2, it easily follows that the full RNN will calculate parity. 
Specifically, by setting $s_0=0$ and applying $f(f(s_0,x_1),\dots,x_n)$ we obtain the parity of $( x_1,\dots,x_n )$, as required.
Note that the implementation requires 4 weight matrices, $W_1,W_2,W_3, B\in \mathbb{R}^3$ which amounts to 12 parameters.

The second part of the proof requires showing that any DeepSets architecture needs at least $K$ neurons to implement \textit{parity} over sets of $n$ elements. Recall that a DeepSets architecture is composed of two functions, $\varphi$ and $\rho$ (see  Eq.~\ref{eq:deepsets}). We assume $\varphi$ and $\rho$ are feed-forward nets with ReLU activations and $l$ hidden layers of fixed width $d$. Our result can be extended to variable width networks.

First we argue that WLOG the function $\varphi$ can be assumed to be the identity $\varphi_I(x)=x$. To see this, recall that there are only two possible $x$ values. We will now take any DeepSet implementation $\varphi,\rho$ and show that it has an equivalent implementation $\varphi_I,\tilde{\rho}$. Denote the two values that $\varphi$ takes by $\varphi(0)=\v_0$ and $\varphi(1)=\v_1$. Thus, after the sum aggregation of the DeepSet architecture we have:
\begin{equation}
    \sum_i \varphi(x_i) = n_0\v_0+n_1\v_1 = (n-n_1)\v_0 +n_1\v_1= n\v_0 + n_1(\v_1-\v_0) 
\end{equation}

where $n_0,n_1$ are the number of zeros and one in the sequence $x_1,\ldots,x_n$ (so that $n_0+n_1=n$).
Now note that: $\sum_i \varphi_I(x_i) = \sum_i x_i = n_1$. Then we can define $\tilde{\rho}(z) = \rho \left(n\v_0 + z(\v_1 -\v_0)\right)$, and

we have that the implementations are equivalent, namely:
\begin{equation}
    \rho\left(\sum_i \varphi(x_i)\right) = \tilde{\rho}\left(\sum_i \varphi_I(x_i)\right)
\end{equation}
From now on, we therefore assume $\varphi(x)=x$.

Given the above, we can assume that $\varphi(x)=x$. Therefore $\rho:\mathbb{R}\rightarrow\mathbb{R}$ is a continuous function which takes $\sum_{i=1}^n x_i$ as input and outputs $1$ for odd values and $0$ for even values. Recall that a ReLU network implements a piecewise linear function, and a network of depth $L$ and $r$ units per layer can model a function with at most $r^L$ linear segments \citep{montufar2014number}. The $\rho$ function above must have at least $n$ segments since it switches between $0$ and $1$ values $n$ times. This network has $Lr^2$ parameters. Minimizing $Lr^2$ under the condition that $r^L=n$ the minimum is $L=\log_2{n},r=2$. Thus the minimum number of units in a network that implements $\rho$ is $4\log_2{n}=4K$, proving the result.
\end{proof}

\section{Permutation Invariant Regularization \label{sec:regularization}}
In the previous section we showed that RNNs can implement certain permutation invariant functions in a compact manner. On the other hand, if we learn an RNN from data, we have no guarantee that it will be permutation invariant. The question is then: how can we learn RNNs that correspond to permutation invariant functions.
In this section we present an approach to this problem, which relies on a regularization term that ``encourages'' permutation invariant RNNs. Intuitively, such a term should be designed such that it is minimized only by permutation invariant RNNs.

\subsection{Regularizing Towards Subset Permutation Invariance}
Our goal is to define a function $R(f)$ that will be zero when $f$ is subset permutation invariant and non-zero otherwise.

Following Definition \ref{def:subset_perm_inv} it is natural to define the expected squared error between the RNN state for all sub-sequence pairs that are required to have the same output. Clearly, having the same state will result in the same output. Thus we define the regularizer:
\be\label{eq:r1}
R_{SUB}(f)= \mathbb{E}\Big[\big(f(\bld{s_0},\bld{x'_{\hat{\pi}_1}},\dots,\bld{x'_{\hat{\pi}_m}})-f(\bld{s_0},\bld{x'_{\tilde{\pi}_1}},\dots,\bld{x'_{\tilde{\pi}_m}})\big)^2\Big]
\ee
where the expectation is taken with respect to $\cD$, $\hat{\pi},\tilde{\pi} \in S^m$ and the subsequence sampling.

\subsection{Pair Permutation Invariance Regularization}
Calculating the $R_{SUB}$ regularizer exactly will take exponential time, and thus we must resort to approximations. The simplest approach would be to randomly select a subset and two permutations and then replace the expectation in SUB with its empirical average. However, as we show next, there is a simpler approach to regularization.

We next suggest an alternative regularizer, $R_{SIRE}$ that also vanishes for subset-permutation-invariant models, but avoids the permutation sampling in $R_{SUB}$.
Our key insight is that because of the recurrent nature of the RNN, one only needs to verify invariance by considering invariance to adding two elements to an existing sub-sequence. We next state the key result that facilitates the new regularizer.
\begin{theorem}\label{thm:reg_eq_prm_inv}
An RNN is subset-permutation invariant with respect to $\cD$ if $\forall \bld{x_1},\bld{x_2}\in\mathcal{X}$ and $\forall\bld{s}\in\mathcal{S}_{\cD,\Theta}$ it holds that:
\begin{equation} \label{eq:main_thm}
    f(\bld{s},\bld{x_1},\bld{x_2})=f(\bld{s},\bld{x_2},\bld{x_1})
\end{equation}
\end{theorem}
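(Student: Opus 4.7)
My plan is to reduce the general subset-permutation-invariance statement to the two-element hypothesis of the theorem via the standard fact that any permutation in $S^m$ is a product of adjacent transpositions. First I would fix an arbitrary sequence $(\bld{x_1},\ldots,\bld{x_n})$ in the support of $\cD$, an arbitrary sub-sequence $(\bld{x'_1},\ldots,\bld{x'_m})$, and two permutations $\hat{\pi},\tilde{\pi}\in S^m$. Since the symmetric group is generated by adjacent transpositions, it suffices to prove the following one-step claim: for every ordering $(\bld{y_1},\ldots,\bld{y_m})$ of $(\bld{x'_1},\ldots,\bld{x'_m})$ and every index $k\in\{1,\ldots,m-1\}$,
\begin{equation*}
f(\bld{s_0},\bld{y_1},\ldots,\bld{y_{k-1}},\bld{y_k},\bld{y_{k+1}},\bld{y_{k+2}},\ldots,\bld{y_m})
=
f(\bld{s_0},\bld{y_1},\ldots,\bld{y_{k-1}},\bld{y_{k+1}},\bld{y_k},\bld{y_{k+2}},\ldots,\bld{y_m}).
\end{equation*}
Chaining this equality along a decomposition of $\tilde{\pi}\hat{\pi}^{-1}$ into adjacent transpositions then yields the required identity for Definition~\ref{def:subset_perm_inv}.

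The core of the argument is to unroll the recursion as in \eqref{eq:recursive_example} and set $\bld{s}\defeq f(\bld{s_0},\bld{y_1},\ldots,\bld{y_{k-1}})$, so that both sides of the one-step claim equal $f(\bld{s},\cdot,\cdot,\bld{y_{k+2}},\ldots,\bld{y_m})$ applied to the two orderings of the pair $(\bld{y_k},\bld{y_{k+1}})$. I would then invoke the hypothesis \eqref{eq:main_thm} to conclude $f(\bld{s},\bld{y_k},\bld{y_{k+1}})=f(\bld{s},\bld{y_{k+1}},\bld{y_k})$, and observe that feeding the common suffix $\bld{y_{k+2}},\ldots,\bld{y_m}$ through the (deterministic) update $f$ on both sides preserves equality, giving the one-step claim.

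To apply \eqref{eq:main_thm} I must check that the prefix state $\bld{s}$ lies in $\mathcal{S}_{\cD,\Theta}$. This is where I would use the definition of $\mathcal{S}_{\cD,\Theta}$ carefully: the prefix $(\bld{y_1},\ldots,\bld{y_{k-1}})$ is an ordering of a subset of $(\bld{x'_1},\ldots,\bld{x'_m})$, which in turn is a sub-sequence of the $\cD$-supported $(\bld{x_1},\ldots,\bld{x_n})$; hence $(\bld{y_1},\ldots,\bld{y_{k-1}})\in 2^{\{\bld{x_1},\ldots,\bld{x_n}\}}$ and $\bld{s}\in\mathcal{S}_{\cD,\Theta}$ by Definition~\ref{def_states}.

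The main obstacle I anticipate is precisely this bookkeeping step: one has to be careful that every intermediate prefix encountered along the chain of adjacent swaps is again a sub-sequence of the original $\cD$-supported sequence (not merely a sub-sequence of some previously permuted version), so that the assumption $\bld{s}\in\mathcal{S}_{\cD,\Theta}$ is legitimately available at every step. Everything else is essentially the elementary fact that adjacent transpositions generate $S^m$, combined with the recursive structure of the RNN which localizes each swap to a two-element computation starting from a reachable state.
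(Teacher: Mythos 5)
Your proposal is correct and follows essentially the same route as the paper: your one-step adjacent-swap claim is exactly the paper's Lemma~\ref{lemma:swap_pairs}, and chaining adjacent transpositions to realize $\tilde{\pi}\hat{\pi}^{-1}$ plays the role of the paper's Corollary~\ref{corollary:replace_i_j} together with its selection-sort-style argument in the main proof. If anything, you are more explicit than the paper about the one genuinely delicate point---verifying that every intermediate prefix state along the chain of swaps remains in $\mathcal{S}_{\cD,\Theta}$ so that the hypothesis \eqref{eq:main_thm} may be invoked---which the paper's appendix proofs leave implicit.
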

In order to prove Theorem \ref{thm:reg_eq_prm_inv}, we make use of the following lemma.
\begin{lemma}\label{lemma:swap_pairs}
Assume \eqref{eq:main_thm} holds under the conditions of Theorem \ref{thm:reg_eq_prm_inv}. Then $\forall \bld{x_1},\dots,\bld{x_t}\in\mathcal{X}$ the following holds for all $i\in\{1,\ldots,t-1\}$:
\begin{equation}\label{eq_general_sequence}
    f(\bld{s_0},\bld{x_1},\dots,\bld{x_{i-1}},\textcolor{red}{\bld{x_{i}},\bld{x_{i+1}}},\bld{x_{i+2}},\dots,\bld{x_t})
    =f(\bld{s_0},\bld{x_1},\dots,\bld{x_{i-1}},\textcolor{red}{\bld{x_{i+1}},\bld{x_{i}}},\bld{x_{i+2}},\dots,\bld{x_t})\nonumber
\end{equation}

\end{lemma}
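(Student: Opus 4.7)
The plan is to prove the lemma by applying the hypothesis to the intermediate state reached just before the swap, and then appealing to the recursive structure of the RNN to argue that the tail of the computation is unaffected.

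First I would introduce the state reached after processing the prefix, namely
\begin{equation}
\bld{s^*} \defeq f(\bld{s_0},\bld{x_1},\dots,\bld{x_{i-1}}).
\end{equation}
Because the full sequence $(\bld{x_1},\dots,\bld{x_t})$ is assumed to arise (up to being a subsequence of a supported sample) from $\cD$, the prefix $(\bld{x_1},\dots,\bld{x_{i-1}})$ is itself in $2^{\{\bld{x_1},\dots,\bld{x_t}\}}$, so by Definition \ref{def_states} we have $\bld{s^*}\in\mathcal{S}_{\cD,\Theta}$. This is the only place where the reachability hypothesis of Theorem \ref{thm:reg_eq_prm_inv} is invoked, and getting this step cleanly is the main obstacle, since the lemma is phrased with a blanket ``for all $\bld{x_1},\dots,\bld{x_t}\in\mathcal{X}$'' but really needs the prefix to induce a reachable state.

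Next I would apply the hypothesis \eqref{eq:main_thm} with this particular $\bld{s^*}$ and with the pair $(\bld{x_i},\bld{x_{i+1}})$ to conclude
\begin{equation}
f(\bld{s^*},\bld{x_i},\bld{x_{i+1}})=f(\bld{s^*},\bld{x_{i+1}},\bld{x_i}).
\end{equation}
Call this common state $\bld{\tilde{s}}$.

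Finally, I would use the associative ``unrolling'' identity of the RNN, exactly as in \eqref{eq:recursive_example}, to split the full computation into a prefix, a length-two middle, and a suffix:
\begin{equation}
f(\bld{s_0},\bld{x_1},\dots,\bld{x_{i-1}},\bld{x_i},\bld{x_{i+1}},\bld{x_{i+2}},\dots,\bld{x_t})
= f\bigl(f(\bld{s^*},\bld{x_i},\bld{x_{i+1}}),\bld{x_{i+2}},\dots,\bld{x_t}\bigr),
\end{equation}
and analogously for the swapped sequence, which yields $f(\bld{\tilde{s}},\bld{x_{i+2}},\dots,\bld{x_t})$ on both sides. Since the two computations enter the suffix from the same state $\bld{\tilde{s}}$ and then process identical inputs, determinism of $f$ forces the final states to agree, giving the lemma. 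The argument is essentially a ``local swap commutes, so the rest doesn't notice,'' with all the real content hidden in the reachability bookkeeping of the first step.
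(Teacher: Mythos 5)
Your proof is correct and follows essentially the same route as the paper's: define the prefix state, apply the pairwise-swap hypothesis to it, and note that the identical suffix computation preserves equality. Your explicit justification that the prefix state lies in $\mathcal{S}_{\cD,\Theta}$ is a point the paper's proof glosses over (its blanket ``for all $\bld{x_1},\dots,\bld{x_t}\in\mathcal{X}$'' phrasing does need that reachability caveat), so that extra care is a genuine improvement in rigor rather than a different argument.
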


\begin{corollary}\label{corollary:replace_i_j}
Assume the condition of Lemma \ref{lemma:swap_pairs} holds. Then for a sequence $\bld{x_1},\dots,\bld{x_{t}}$ any two elements $\bld{x_i}$ and $\bld{x_j}$ can be swapped without
changing the value of the resulting state.
\end{corollary}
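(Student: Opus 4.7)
The plan is to reduce an arbitrary transposition of positions $i < j$ to a sequence of adjacent transpositions, then apply Lemma \ref{lemma:swap_pairs} repeatedly. The guiding observation is the standard fact from symmetric-group theory that adjacent transpositions generate $S^t$, and more concretely that the transposition $(i,j)$ can be written as a product of $2(j-i)-1$ adjacent transpositions via a ``bubble'' procedure.

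Concretely, I would proceed as follows. Start from the sequence $\bld{x_1},\dots,\bld{x_t}$ and first apply adjacent swaps at positions $(i,i+1), (i+1,i+2), \dots, (j-1,j)$ in order; each step is justified by Lemma \ref{lemma:swap_pairs}, and the composite effect is to move $\bld{x_i}$ rightward until it reaches position $j$, while $\bld{x_{i+1}},\dots,\bld{x_j}$ each slide one slot to the left. In particular, the original $\bld{x_j}$ now sits at position $j-1$. Next, apply adjacent swaps at positions $(j-2,j-1), (j-3,j-2), \dots, (i,i+1)$; these move $\bld{x_j}$ leftward from position $j-1$ into position $i$, while pushing $\bld{x_{i+1}},\dots,\bld{x_{j-1}}$ back to their original slots. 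The net permutation is exactly the transposition of $\bld{x_i}$ and $\bld{x_j}$, and at every intermediate step the state is unchanged by Lemma \ref{lemma:swap_pairs}.

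Alternatively, one can package this as a short induction on $k = j - i$. The base case $k = 1$ is Lemma \ref{lemma:swap_pairs} itself. For the inductive step, swap $\bld{x_i}$ with $\bld{x_{i+1}}$ (Lemma \ref{lemma:swap_pairs}), then invoke the inductive hypothesis on the pair at positions $i+1$ and $j$ (distance $k-1$), and finally swap positions $i$ and $i+1$ back. Each invocation preserves the resulting state, so the composition does too.

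There is no real obstacle here: the content is purely combinatorial, and Lemma \ref{lemma:swap_pairs} supplies exactly the primitive move we need. The only point worth being careful about is that Lemma \ref{lemma:swap_pairs} is stated for \emph{any} sequence starting from $\bld{s_0}$, so we are free to apply it to the intermediate sequences produced along the way; no extra reachability hypothesis is required because each intermediate sequence is itself a sequence of the form $\bld{x_1},\dots,\bld{x_t}$ read from the left.
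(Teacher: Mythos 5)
Your proposal is correct and follows essentially the same route as the paper: decompose the transposition of positions $i$ and $j$ into $2(j-i)-1$ adjacent swaps (bubbling one element past the other and then restoring the intermediate elements), each justified by Lemma \ref{lemma:swap_pairs}. The only cosmetic difference is the split between the two bubbling phases ($j-i$ then $j-i-1$ swaps in your version versus $j-i-1$ then $j-i$ in the paper's), which does not change the argument.
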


The proofs for Lemma \ref{lemma:swap_pairs} and Corollary \ref{corollary:replace_i_j} are provided in the Appendix.

\begin{proof}[Proof of Theorem \ref{thm:reg_eq_prm_inv}]
Given an arbitrary permutation $\pi\in S^t$ it suffices to show: 
\begin{equation}
    f(\bld{s_0},\bld{x_1},\dots,\bld{x_t})=f(\bld{s_0},\bld{x_{\pi_1}},\dots,\bld{x_{\pi_t}})
    \label{eq:perm_desired}
\end{equation}
Denote by $i$ the first index for which $i\neq \pi_i$. Then $\exists j>i$ such that $\pi_j=i$.
From Corollary \ref{corollary:replace_i_j} we can sawp $\bld{x_{\pi_i}}$ and $\bld{x_{\pi_j}}$.
This process is repeated until \eqref{eq:perm_desired} is satisfied.
\end{proof}

\subsubsection{The SIRE Regularizer}
The result above implies that testing for subset-permutation-invariance is equivalent to testing the effect of adding two inputs to an existing state. This immediately suggests a regularizer that will vanish if and only if the RNN $f$ is subset-permutation-invariant. We refer to this as the Subset-Invariant-Regularizer (SIRE), and define it as follows:

\be\label{eq:r2}
R_{SIRE}(f) = \underset{\bld{x_1},\bld{x_2}\sim\cD \atop \bld{s}\sim\mathcal{S}_{\cD,\Theta}}{\mathbb{E}}\Big[ \big(f(\bld{s},\bld{x_1},\bld{x_2}) - f(\bld{s},\bld{x_2},\bld{x_1})\big)^2 \Big]
\ee
The key advantage of SIRE over SUB is that SIRE requires sampling sub-sequences but not permutations.
Empirically, we show this translates to much faster learning when using $R_{SIRE}$ (see Appendix).

In practice, we of course do not sum over all states $\bf{s}\in\mathcal{S}_{\cD,\Theta}$ as that will require all permutations over the training data. Instead we randomly sample subsets of training sequences and estimate SIRE via an average over those.

In summary, we propose learning an RNN by minimizing the regular training loss (e.g., squared error for regression or cross-entropy for classification) plus the regularization term $R_{SIRE}(f)$ in \eqref{eq:r2}, where it is estimated via sampling. As with any regularization scheme, $R_{SIRE}(f)$ may be multiplied by a regularization coefficient $\lambda$.

\section{Related Work \label{sec:related}}
In recent years, the question of invariances and network architecture has attracted considerable attention, and in particular for various forms of permutation invariances. Several works have focused on characterizing architectures that are ``by--design'' permutation invariant \citep{zaheer2017deep,VinyalsBK15,qi2017pointnet,hartford2018deep,lee2019set,zellers2018neural}.

While the above works address invariance for sets, there has also been work on invariance of computations on graphs \citep{maron19,herzig2018mapping}. In these, the focus is on problems that take a graph as input, and the goal is for the output to be invariant to all equivalent representations of the graph.

The most relevant line of work relating to ours is \citet{murphy2018janossy} which suggests viewing a permutation invariant function as an average of the output of all possible orderings applied to a permutation variant function. As this approach is intractable, the authors suggest a few efficient approximations.

Our work is conceptually different from the above works. These approaches are invariant ``by design'', either explicitly by implementing a permutation invariant pooling operator \citep{zaheer2017deep} or by approximating such a pooling layer \citep{murphy2018janossy}. We take a different approach where we do not attempt to obtain a network which is strictly permutation invariant. Instead, we control the variance-invariance spectrum via a regularization term.

\section{Experiments \label{sec:experiments}}
In order to evaluate the empirical effectiveness of our regularization scheme we compare it to other methods for learning permutation invariant models. Finally, we also demonstrate how our regularization scheme is effective in ``semi'' permutation invariant settings.
\newline
{\bf Baselines:} We compare our method (namely SIRE regularization) to two permutation invariance learning methods: DeepSets \citep{zaheer2017deep} and the $\pi-SGD$ algorithm from the Janossy Pooling paper \citep{murphy2018janossy}. We used the code provided by \citep{murphy2018janossy} for experiments over digits and the code by \cite{lee2019set} for the point cloud experiment. Cross-validation was used for learning all architectures.

\subsection{Learning Parity}
In Theorem \ref{thm:expressiveness} we showed that RNNs can implement the parity function more efficiently than DeepSets. Here we provide an empirical demonstration of this fact. As training data we take Boolean sequences $\bf{x}$ of length at most ten, where the label is their parity. We train both RNNs and DeepSets on those, and test on sequence length up to $100$. \figref{fig:xor} show the results, and it can be seen that RNNs indeed learn the correct parity function, whereas DeepSets do not. For both networks we used the minimal width required to perfectly fit the training data. For full details see the Appendix.

\begin{figure}[h]
    \centering
    \includegraphics[scale=0.17]{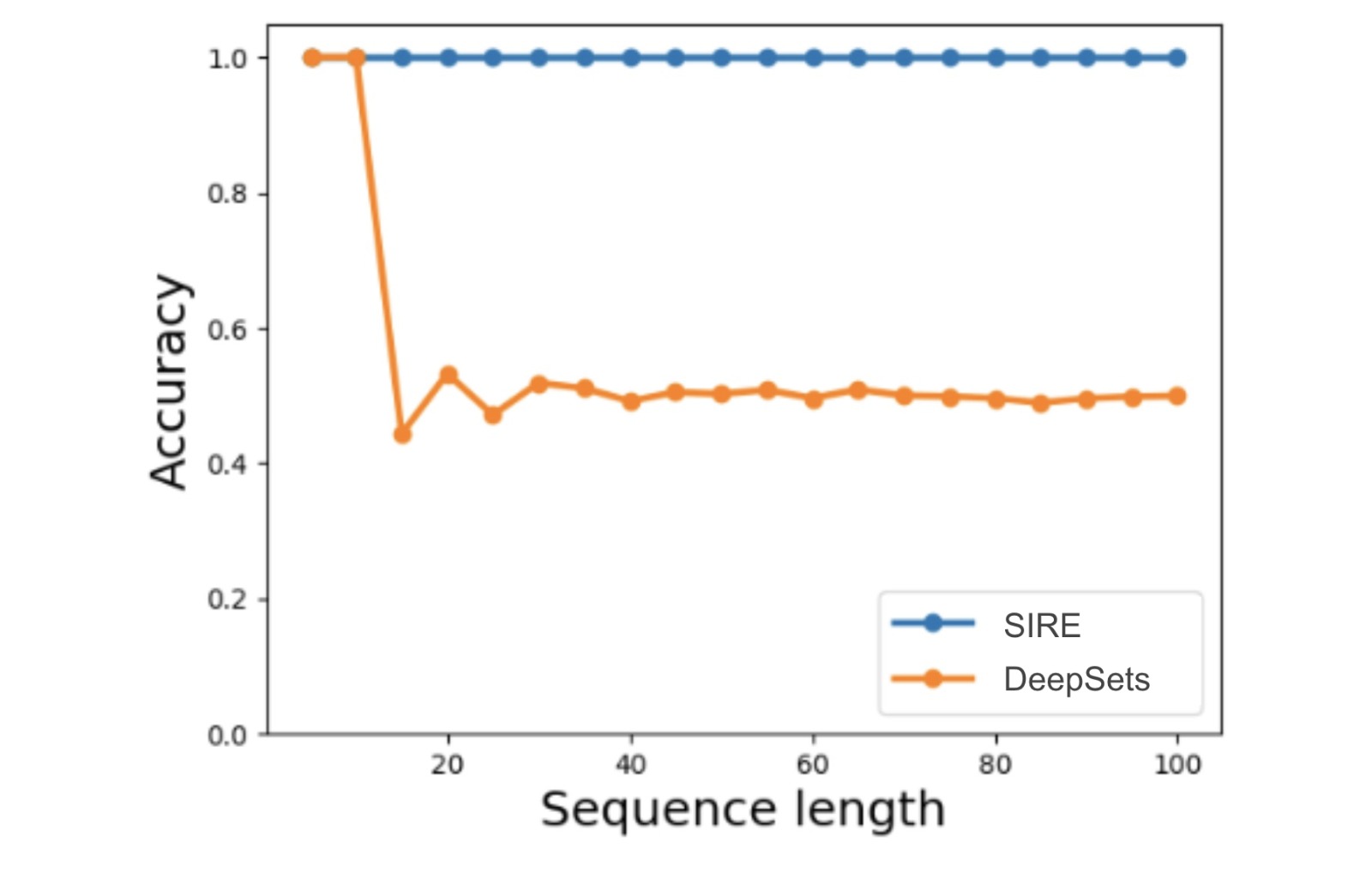}    
    \caption{Test accuracy as a function of sequence length for learning parity, using DeepSets and RNNs.}
    \label{fig:xor}
\end{figure}

\subsection{Arithmetic Tasks on Sequences of Integers \label{sec:integer_seq}}
To evaluate our regularization approach, we consider three tasks used in \citet{murphy2018janossy}.\footnote{The original task includes 2 more tasks which we omit since all models achieved near perfect performance.} In all tasks the input is a sequence $( x_1,\dots,x_n )$ where $x_i\in\lbrace 0,\dots,99\rbrace$. The tasks are: (1) \textbf{\textit{sum}}: The label is the sum of all elements in the sequence. (2) \textbf{ \textit{range}}: The label is the difference between the maximum and minimum elements in the sequence. (3) \textbf{\textit{variance}}: The label is the empirical variance of the sequence: $\frac{1}{n}\sum_{j=1}^n(x_j-\frac{1}{n}\sum_{i=1}^nx_i)^2$.

In \citet{murphy2018janossy} all tasks were evaluated with $n=5$, here we perform all experiments with $n=5,10,\dots,30$.

\begin{figure}[h]
    \centering
    \begin{minipage}{0.3\textwidth}
        \centering
        \includegraphics[scale=0.2]{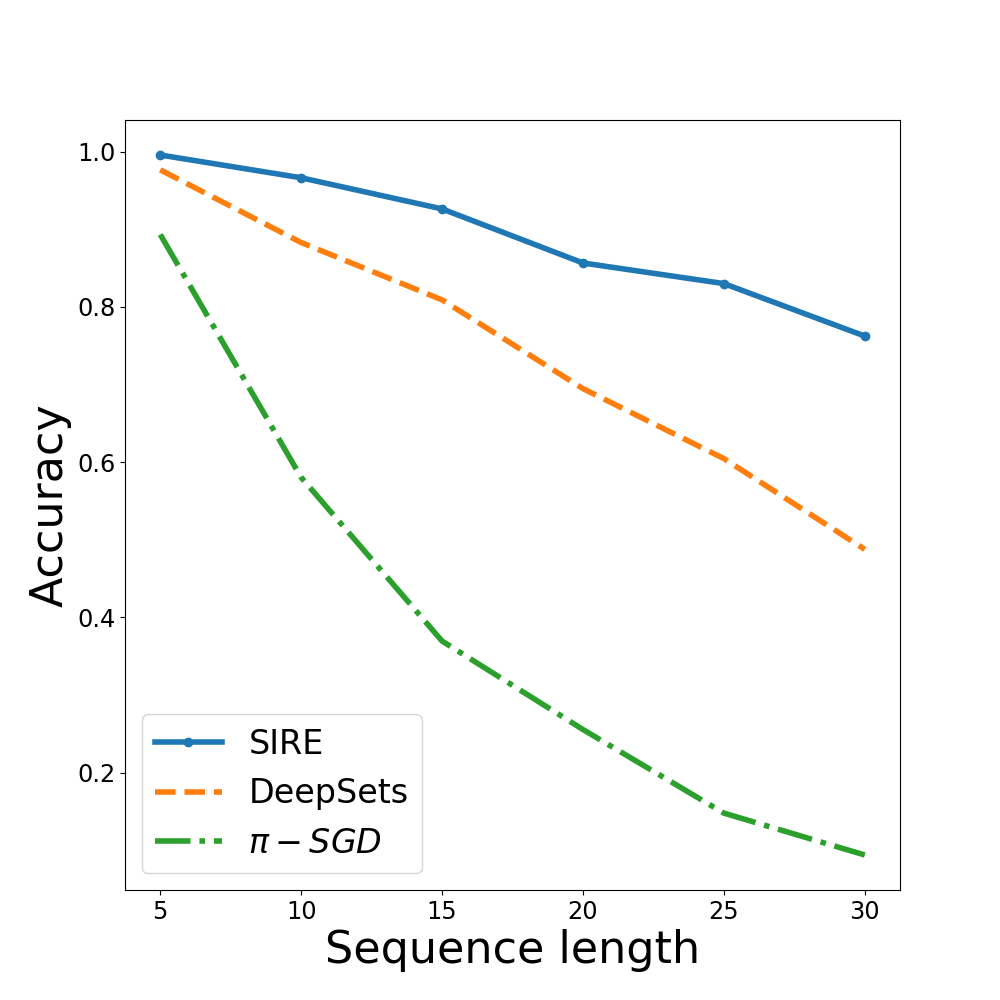}    
        \label{fig:sum}
    \end{minipage}\hfill
    \begin{minipage}{0.3\textwidth}
        \centering
        \includegraphics[scale=0.2]{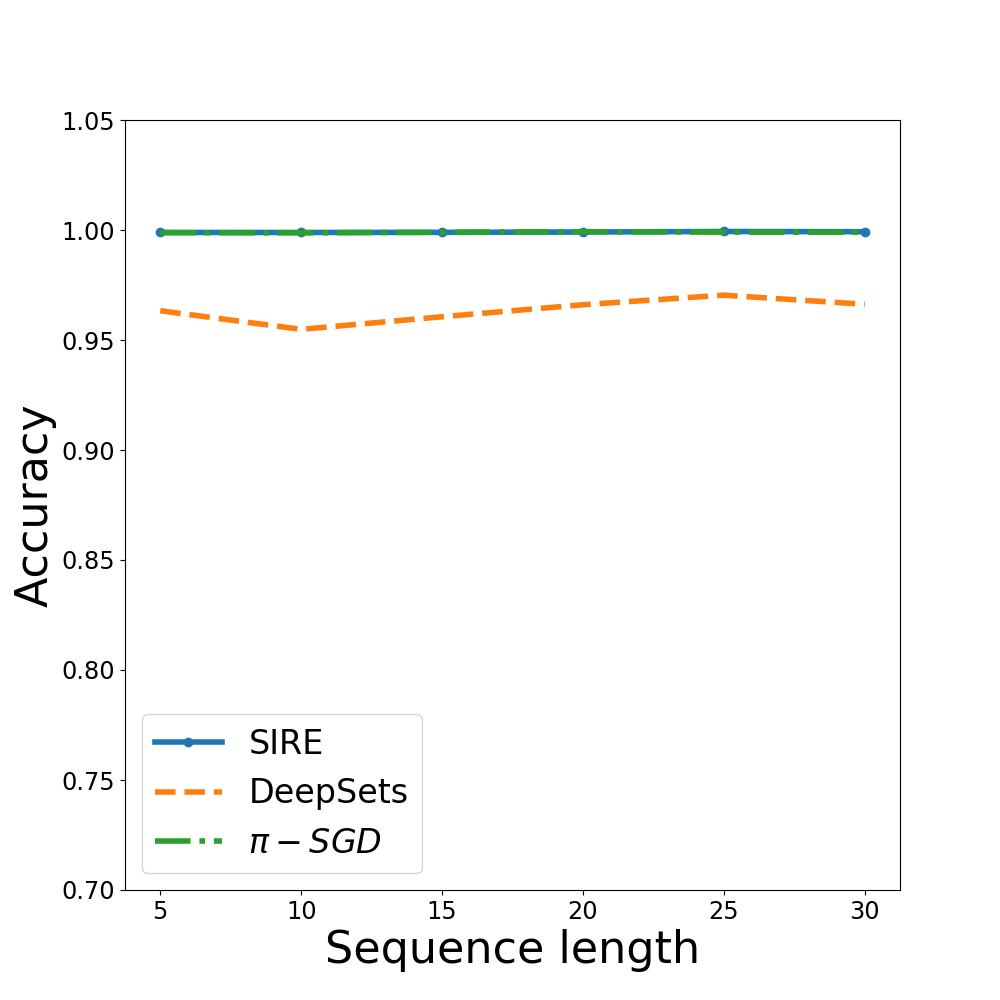}    
        \label{fig:range}
    \end{minipage}\hfill
    \begin{minipage}{0.3\textwidth}
        \centering
        \includegraphics[scale=0.2]{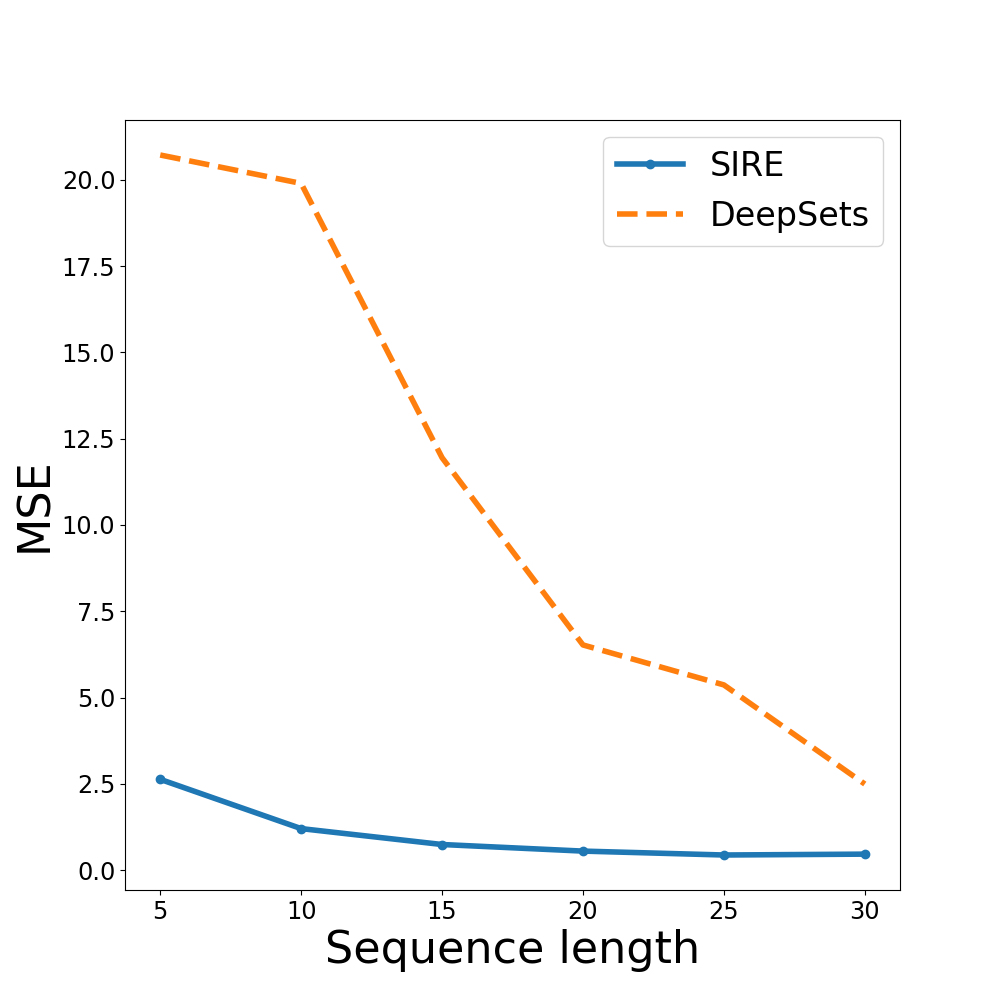}
        \label{fig:var}
    \end{minipage}
    \caption{Test prediction accuracy (zero-one error) of \textit{sum} (left) and \textit{range} (center). For the \textit{variance} experiment we report mean square error (as in \citet{murphy2018janossy}).}
    \label{fig:arith}
\end{figure}
Figure \ref{fig:arith} shows the average accuracy of each model as a function of the sequence length.\footnote{For each sequence length we perform cross validation to select the best configuration and report the average of 20 runs for \textit{sum} and the average of 3 runs for \textit{range} and \textit{variance}.} The \textit{range} task turns out to be less challenging than the \textit{sum} task. This is probably because the ground truth for a sequence of any size is bounded by $99$ (since $\max x_i\le 99$ and $\min x_i\ge 0$). In contrast, in the \textit{sum} task, the output is bounded by $99\cdot n$ which makes the task harder for longer sequences. Results on the \textit{sum} task clearly show that $R_{SIRE}$ generalizes better to longer sequences than the baselines. For the \textit{variance} task we report RMSE values,\footnote{Lower is better.} the graph shows that SIRE outperforms DeepSets. We omit $\pi-$SGD as a baseline in the \textit{variance} task as it failed to converge to low training error with all configurations explored, resulting in poor performance.

\subsection{Point Clouds}
We evaluate our method on a 40-way point cloud classification task using ModelNet40 \citep{chang2015shapenet}. A point-cloud \citep{chang2015shapenet,wu20153d} consists of a set of $n$ vectors in $\mathbb{R}^3$ and has many applications in the growing trend of robotics where LIDAR sensors are common. As point-clouds are represented using a list of vectors, which do not induce a natural order, they are ideal candidates for evaluation of permutation invariant methods. Experiment details are as in \citep{zaheer2017deep}.

\begin{table}[H]
    
    \centering
    \begin{tabular}{ |c|c|c|c| }
        \hline
        \textbf{Method} & \textbf{100 pts} & \textbf{1000 pts} & \textbf{5000 pts}\\
        \hline
         DeepSets & 0.825 & 0.872 & \textbf{0.90} \\ \hline
         SIRE & \textbf{0.835} & \textbf{0.878} & 0.899 \\ \hline
    \end{tabular}
\caption{Point cloud classification results.}
    \label{tab:point_cloud}
\end{table}

In Table \ref{tab:point_cloud} we report results for $n=100,1000,5000$. For $n=100,1000$ SIRE outperforms DeepSets and achieves comparable results for $n=5000$.\footnote{We also evaluated Set Transformer \citep{lee2019set}, using the official implementation (\url{https://github.com/juho-lee/set_transformer}). We were not able to reproduce reported results for the Set Transformer model, and thus do not report results for it. In addition we evaluated $\pi-$SGD, but it exhibited optimization difficulties for lengths greater than $n=100$, resulting in poor results compared to other baselines.}

\subsection{Arithmetic Semi Permutation Task\label{sec:semi}}
One advantage of our method is the possibility to tune the level of invariance an RNN should capture. This may be useful in real-world datasets where the data is permutation invariant to some extent. For example, \textit{human activity recognition} signals often correspond to repetitive action such as walking, running, etc. Another example is classification of ECG readings which are also characterized by periodic signals.

\begin{table}[H]
    \centering
    \begin{tabular}{ |c|c|c|c| }
        \hline
         & \textbf{seq. len=10} & \textbf{seq. len=15} & \textbf{seq. len=20}\\
        \hline
         $\lambda=0.0$ & 0.9346 (0.006) & 0.9461 (0.001) & 0.9678 (0.005) \\ \hline
         $\lambda=0.01$ & \bf{0.9584 (0.008)} & \textbf{0.9658 (0.008)} & \bf{0.9780 (0.004)} \\ \hline
    \end{tabular}
    \caption{Learning Semi Permutation Invariant models on the \textit{half-range} problem. Test accuracy for two regularization coefficients and different sequence length. Note that setting $\lambda=0$ amounts to vanilla RNN without regularization.}
    \label{tab:semi_perm_inv_results}
\end{table}

Here we demonstrate that our regularization method can capture such ``soft'' permutation invariance by defining the toy task of \textit{half-range}. The data is a sequence of integers generated in a similar fashion to the \textit{range} task above, but not in a completely invariant manner. The target in this task is to predict the difference of the maximum integer from the first half of the sequence and the minimum integer from the second half of the sequence. Formally, given $( x_1,\dots,x_k )$, \textit{half-range} is defined as
\be
HR(x_1,\dots,x_k)=\max \left\lbrace x_1,\dots,x_{\lfloor\frac{k}{2} \rfloor} \right\rbrace
    -\min \left\lbrace x_{\lfloor\frac{k}{2} \rfloor+1},\dots,x_k \right\rbrace\nonumber
\ee
Clearly \textit{half-range} is not permutation invariant. Despite not being completely permutation invariant, the output of \textit{half-range} is not sensitive to many of the possible permutations. Thus, it makes sense to learn it by regularizing towards invariance using the SIRE regularizer. Result for regularized and un-regularized models are shown in Table \ref{tab:semi_perm_inv_results}. It can be seen that the regularized version consistently outperforms the standard RNN, showing that the notion of semi-invariance is empirically effective in this case.

\subsection{Locally Perturbed MNIST\label{sec:semi}}
Since the introduction of the MNIST dataset \citep{lecun1998gradient}, it was used as a starting point for many variations. For example, \citep{larochelle2007empirical} created Rotated-MNIST, a more challenging version of MNIST where digits are rotated by a random angle. Another example is MNIST-C \citep{mu2019mnist}, where a corrupted test set was created to evaluate out-of-distribution robustness of computer vision models. Yet another variant of MNIST is Perturbed MNIST \citep{goodfellow2013empirical, srivastava2013compete}, where random permutations are applied to digits.

Here we present \textit{Locally Perturbed MNIST}, a variant of MNIST where pixels are randomly permuted with nearby pixels (see Figure \ref{fig:locally_perturbed_mnist}). Our goal is to test the performance of our method on data that exhibits some degree of permutation invariance. We believe that such structure is also present in problems such as activity recognition and document analysis.

\begin{figure}[H]
    \centering
    \includegraphics[scale=0.3]{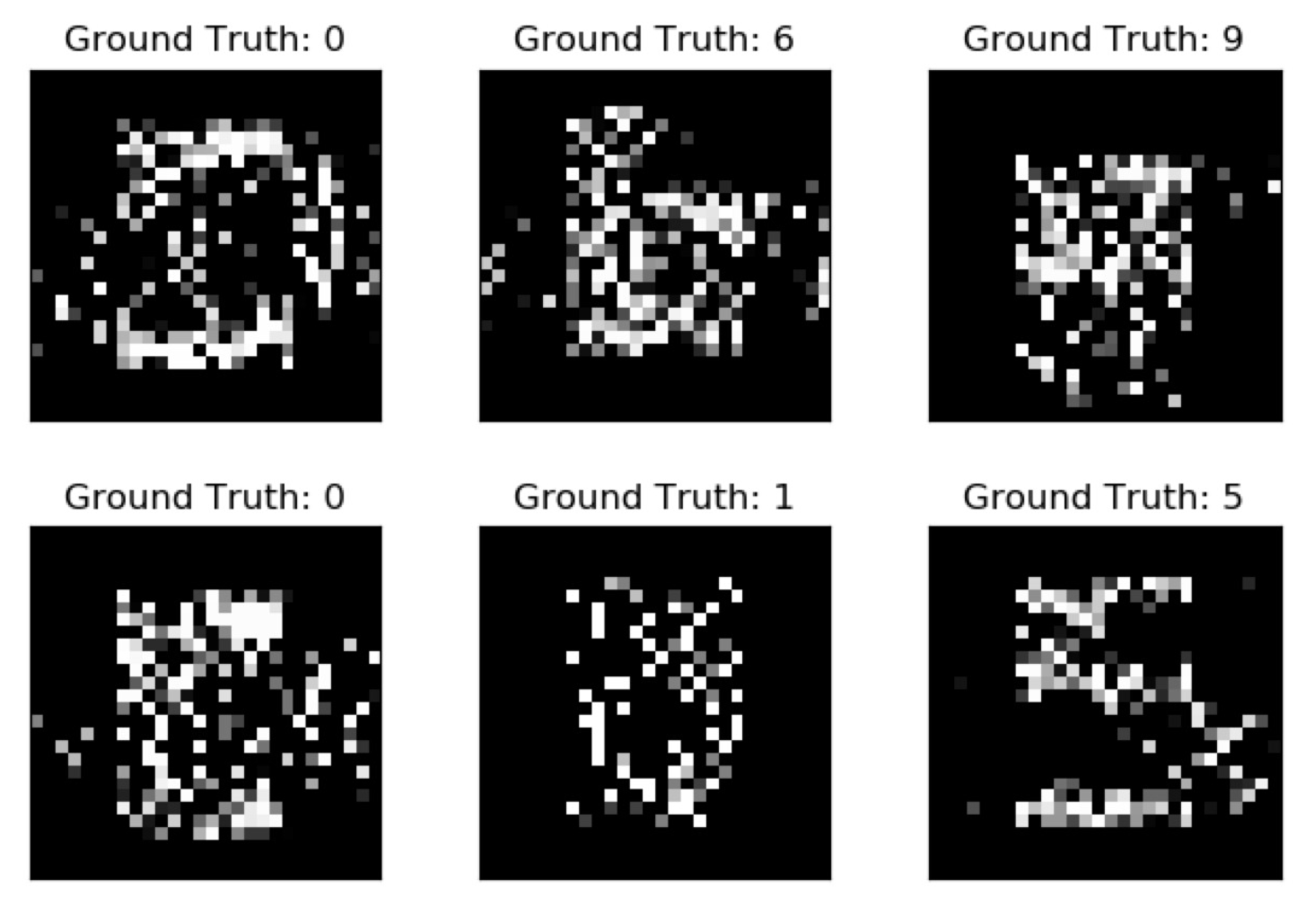}    
    \caption{Locally Perturbed MNIST digits. See Appendix for complete details on the perturbation applied.}
    \label{fig:locally_perturbed_mnist}
\end{figure}
Since the spatial structure of the image is partially preserved, a relevant baseline is a CNN.\footnote{We use a simple CNN with 2 convolution layers followed by 2 fully connected layers. This architecture achieves accuracy of $0.9963$ on regular MNIST.} The CNN achieves accuracy of $0.963$ on Locally Perturbed MNIST. Learning with a GRU achieves accuracy of $0.833$. Adding $R_{SIRE}$ to the same architecture boosts performance to $\boldsymbol{0.977}$. These findings suggest that $R_{SIRE}$ is effective for learning data with partial invariance properties.

\section{Discussion \label{sec:discuss}}
We have introduced a novel approach for modeling permutation invariant learning problems by using recurrent architectures. While RNNs are generally order dependent, we suggest a regularization term and prove that when this term is zero, the network is permutation invariant. We further discuss the permutation invariant \textit{parity} function, for which fixed aggregation based methods such as DeepSets need a large number of parameters whereas simple recurrent models can implement \textit{parity} with $O(1)$ parameters. Empirically, we show that recurrent models can easily solve tasks where DeepSets models do not do as well. We further demonstrate that our method scales better to larger set sizes compared to the recurrent method of \citet{murphy2018janossy}.

In addition to the above, we consider a setting where the data is partially permutation invariant. This property cannot be captured by architectures that are fully permutation invariant by design, and therefore this non-invariant case is typically solved using RNNs. We show that adding our regularization term helps in such ``semi'' permutation invariant problems. A common approach to solve time series classification is to perform some sort of feature engineering with a sliding window and then treating the resulting features as a set of features without significance to order.
Our method may prove useful in such scenarios.

An interesting theoretical question which we do not discuss is that of optimization issues for different architectures. For example, we noticed empirically that our regularization method leads to better optimization errors than RNN without regularization. We hypothesize that this is because the regularization operates on shorter sequences, and can thus alleviate optimization issues related to vanishing gradients.

We note that our regularizer does not require labeled data, and can thus employ unlabeled data in a semi-supervised setting. 

Taken together, our results demonstrate the potential of recurrent architectures for permutation invariant and partially invariant modeling. They also serve to further highlight the importance of sample complexity considerations when learning invariant functions. Namely, different implementations of an invariant function may require a different number of parameters and thus result in different sample complexities. In future work we plan to provide a more comprehensive theoretical account of these phenomena.

\section{Broader Impact}
In this work, we analyze an approach for learning recurrent models in cases where there is an underlying permutation invariance. The method can improve sequence labeling systems. We do not see any ethical aspects with the contribution. Societal aspects are positive in terms of improving accuracy of models in healthcare for example.

\section{Acknowledgements}
This project has received funding from the European Research Council (ERC) under the European Unions Horizon 2020 research and innovation programme (grant ERC HOLI 819080).

\bibliography{neurips_2020}

\appendix
Here we provide proofs for the results in the paper, as well as additional information about experiments, and further evaluations.
\section*{Appendix A}
\subsection*{Missing Proofs}

\begin{proof}[Proof of Lemma 6]

Denote
\be\label{eq:s_hat}
    \bld{\hat{s}}\eqdef f(\bld{s_0},\bld{x_1},\dots,\bld{x_{i-1}})
\ee
Substituting \eqref{eq:s_hat} into  $f(\bld{s_0},\bld{x_1},\dots,\bld{x_t})$, we have:
\begin{equation}\label{eq:short_sequence}
    f(\bld{\hat{s}},\bld{x_i},\bld{x_{i+1}},\bld{x_{i+2}},\dots,\bld{x_t})
\end{equation}
\eqref{eq:short_sequence} can be written as (see Section 2)
\begin{equation}\label{eq:nested_format}
    f(f(\bld{\hat{s}},\bld{x_i},\bld{x_{i+1}}),\bld{x_{i+2}},\dots,\bld{x_t})
\end{equation}
Using Assumption 4.2, we can write \eqref{eq:nested_format} as: 
\begin{equation}
    f(f(\bld{\hat{s}},\bld{x_{i+1}},\bld{x_{i}}),\bld{x_{i+2}},\dots,\bld{x_t})
\end{equation}
Plugging back the simplified notation of nested applications of $f$ and using the definition of \eqref{eq:s_hat}, the above yields:
\begin{equation}
    f(\bld{s_0},\bld{x_1},\dots,\bld{x_{i-1}},\bld{x_{i+1}},\bld{x_{i}},\bld{x_{i+2}},\dots,\bld{x_t})
\end{equation}
which concludes the proof.
\end{proof}

\begin{proof}[Proof of Corollary 7]
Assume WLOG $i<j$. We need to show that under the conditions of Lemma 6, the following holds
\begin{align}\label{eq:swap_corollary}
    f(\bld{s_0},\bld{x_1},\dots,\bld{x_{i-1}},\textcolor{red}{\bld{x_{i}}},\bld{x_{i+1}},\dots&,\bld{x_{j-1}},\textcolor{red}{\bld{x_{j}}},\bld{x_{j+1}},\dots,\bld{x_t})=\\
    & f(\bld{s_0},\bld{x_1},\dots,\bld{x_{i-1}},\textcolor{red}{\bld{x_{j}}},\bld{x_{i+1}},\dots,\bld{x_{j-1}},\textcolor{red}{\bld{x_{i}}},\bld{x_{j+1}},\dots,\bld{x_t})\nonumber
\end{align}
From Lemma 6 we can replace $\bld{x_{i}}$ and $\bld{x_{i+1}}$, which yields: 
\begin{equation}
    f(\bld{s_0},\bld{x_1},\dots,\bld{x_{i-1}},\bld{x_{i+1}},\textcolor{red}{\bld{x_{i}}},\bld{x_{i+2}},\dots,\bld{x_{j-1}},\textcolor{red}{\bld{x_{j}}},\bld{x_{j+1}},\dots,\bld{x_t})
\end{equation}
This process can be repeated for $j-i-1$ times, resulting in:
\begin{equation}
    f(\bld{s_0},\bld{x_1},\dots,\bld{x_{i-1}},\bld{x_{i+1}},,\dots,\bld{x_{j-1}},\textcolor{red}{\bld{x_{i}}},\textcolor{red}{\bld{x_{j}}},\bld{x_{j+1}},\dots,\bld{x_t})
\end{equation}
Similarly, swapping $\bld{x_j}$ with the elements preceding it for $j-i$ times will result in the RHS of \eqref{eq:swap_corollary}, concluding the proof.

\end{proof}
\section*{Appendix B}
\subsection*{Parity Experiment Details}\label{appendix:parity_exp}
Both networks were trained with $1000$ randomly generated binary sequences with lengths between $2$ and $10$. For the RNN, $20$ neurons were sufficient for convergence to zero training error. We use a DeepSet with one hidden layer for the preprocessing network $\varphi$, and one hidden layer for the aggregating network $\rho$. Both the $\varphi$ and $\rho$ have a width of $100$ which was the minimal width required for convergence for the architecture used. The test set consists of $3000$ examples and was generated in a similar fashion to the train set.

\subsection*{Arithmetic Tasks on Sequences of Integers}
\label{appendix:arith}
The range of integers used is $\{0,\ldots,99\}$ for all experiments. The sum experiment was repeated twenty times, and the others three times. We report average accuracy.

Since the tasks defined are regression tasks in nature, we follow \cite{zaheer2017deep, murphy2018janossy} and use an $L_1$ loss for training. At test time, we round the output of the network to the closest integer and report accuracy using the zero-one loss. For the variance task we report mean squared error (MSE).

\subsection*{Point Cloud Experiment}
 Implementation of point-cloud experiments was based on the official repository of Set Transformers.\footnote{\href{https://github.com/juho-lee/set_transformer}{ https://github.com/juho-lee/set\_transformer}}
 We omit Set Tranformer \citep{lee2019set} from the comparison as it did not reproduce the reported results. For DeepSets and our method we use the same architectures used in the DeepSets experiments \citep{zaheer2017deep}. The preprocessing network $\varphi$ is a feed forward neural net with three hidden layers of width $256$ and \textit{TanH} activations. For the output network, $\rho$, we use a similar network with one hidden layer and add dropout with a rate of $0.5$.

In order to train SIRE we use a GRU with a single layer with width $256$ for $n=100$ and $n=1000$, for $n=5000$ we use a width of $512$. We use Adam optimizer with a learning rate of $1e^{-3}$. We apply a dropout rate of $0.75$ in the GRU layer and a batch size of $n=200$. We use a regularization coefficient of $0.1$ for all sizes. All hyperparameters were selected using cross validation. For $n=5000$ we use Truncated Back Propagation Through Time with a window of size $1000$.

\subsection*{Locally Perturbed MNIST}
\label{appendix:locally_perturbed_mnist}
In order to generate Locally Perturbed MNIST we flatten each digit to a $784$ dimensional vector. We then perform a ``convolution'' like operation with full stride. At each window we apply a random permutation. This process limits the distance of a pixel from its original position by at most the window size. We perform the above process twice with window sizes $4$ and $7$ (Figure \ref{fig:locally_perturbed_mnist}).\footnote{Resulting in an offset of at most $11$ from the pixels original location.}
\section*{Appendix C}
\subsection*{Comparison of $R_{SUB}$ and $R_{SIRE}$}
In the main text, we considered two possible regularizers: SUB and SIRE. Both had a value of zero for permutation invariant models but are otherwise different. As we argue in the main text, SIRE is expected to perform better under a given budget of samples, since it enumerates over the state space more efficiently.
In order to empirically evaluate this effect, we perform the \textit{sum} experiment using 200 training examples with sequence length 10 over $\lbrace 0,\dots,19\rbrace$. We evaluate three regularization coefficient values, $\lambda=0.001,0.01,0.1$ for each scheme. The best values on holdout are $\lambda_{SIRE}=0.1$ and $\lambda_{SUB}=0.001$.
Each experiment was repeated $5$ times, and Figure \ref{fig:convergence_rate_100} shows the results averaged over these runs.

\begin{center}
    \begin{figure}[H]
    \centering
    \includegraphics[scale=0.3]{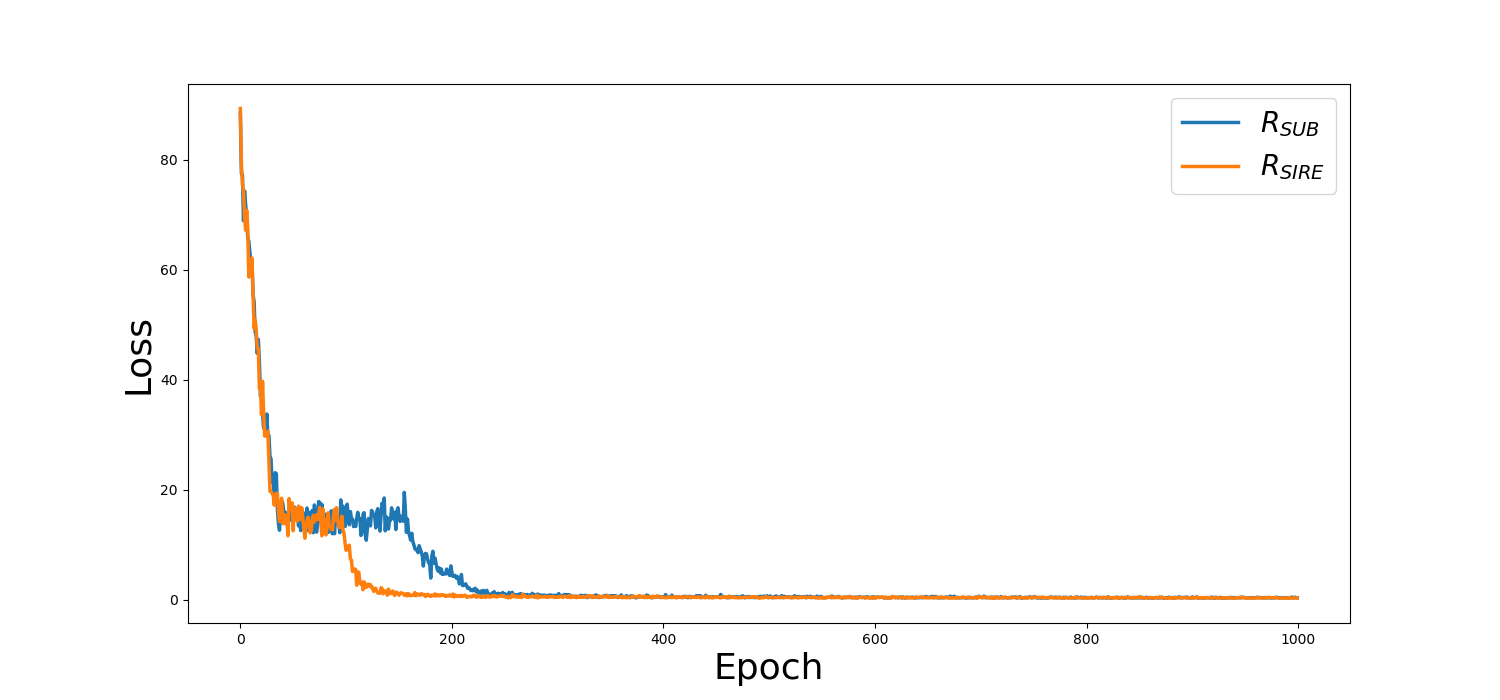}
    \caption{Loss curve of training for 1000 epochs with each regularizer.}
    \label{fig:convergence_rate_100}
\end{figure}
\end{center}
It can clearly be seen that using $R_{SIRE}$ results in faster convergence. Furthermore, the test accuracy obtained by $R_{SIRE}$ is \textbf{0.792 (0.09)} compared to an accuracy of \textbf{0.759 (0.11)} achieved by $R_{SUB}$. Thus, we conclude that in this case SIRE outperforms SUB both in convergence speed and test accuracy of the resulting model.

\end{document}


\maketitle












\appendix
Here we provide proofs for the results in the paper, as well as additional information about experiments, and further evaluations.
\section*{Appendix A}
\subsection*{Missing Proofs}

\begin{proof}[Proof of Lemma 6]

Denote
\be\label{eq:s_hat}
    \bld{\hat{s}}\eqdef f(\bld{s_0},\bld{x_1},\dots,\bld{x_{i-1}})
\ee
Substituting \eqref{eq:s_hat} into  $f(\bld{s_0},\bld{x_1},\dots,\bld{x_t})$, we have:
\begin{equation}\label{eq:short_sequence}
    f(\bld{\hat{s}},\bld{x_i},\bld{x_{i+1}},\bld{x_{i+2}},\dots,\bld{x_t})
\end{equation}
\eqref{eq:short_sequence} can be written as (see Section 2)
\begin{equation}\label{eq:nested_format}
    f(f(\bld{\hat{s}},\bld{x_i},\bld{x_{i+1}}),\bld{x_{i+2}},\dots,\bld{x_t})
\end{equation}
Using Assumption 4.2, we can write \eqref{eq:nested_format} as: 
\begin{equation}
    f(f(\bld{\hat{s}},\bld{x_{i+1}},\bld{x_{i}}),\bld{x_{i+2}},\dots,\bld{x_t})
\end{equation}
Plugging back the simplified notation of nested applications of $f$ and using the definition of \eqref{eq:s_hat}, the above yields:
\begin{equation}
    f(\bld{s_0},\bld{x_1},\dots,\bld{x_{i-1}},\bld{x_{i+1}},\bld{x_{i}},\bld{x_{i+2}},\dots,\bld{x_t})
\end{equation}
which concludes the proof.
\end{proof}

\begin{proof}[Proof of Corollary 7]
Assume WLOG $i<j$. We need to show that under the conditions of Lemma 6, the following holds
\begin{align}\label{eq:swap_corollary}
    f(\bld{s_0},\bld{x_1},\dots,\bld{x_{i-1}},\textcolor{red}{\bld{x_{i}}},\bld{x_{i+1}},\dots&,\bld{x_{j-1}},\textcolor{red}{\bld{x_{j}}},\bld{x_{j+1}},\dots,\bld{x_t})=\\
    & f(\bld{s_0},\bld{x_1},\dots,\bld{x_{i-1}},\textcolor{red}{\bld{x_{j}}},\bld{x_{i+1}},\dots,\bld{x_{j-1}},\textcolor{red}{\bld{x_{i}}},\bld{x_{j+1}},\dots,\bld{x_t})\nonumber
\end{align}
From Lemma 6 we can replace $\bld{x_{i}}$ and $\bld{x_{i+1}}$, which yields: 
\begin{equation}
    f(\bld{s_0},\bld{x_1},\dots,\bld{x_{i-1}},\bld{x_{i+1}},\textcolor{red}{\bld{x_{i}}},\bld{x_{i+2}},\dots,\bld{x_{j-1}},\textcolor{red}{\bld{x_{j}}},\bld{x_{j+1}},\dots,\bld{x_t})
\end{equation}
This process can be repeated for $j-i-1$ times, resulting in:
\begin{equation}
    f(\bld{s_0},\bld{x_1},\dots,\bld{x_{i-1}},\bld{x_{i+1}},,\dots,\bld{x_{j-1}},\textcolor{red}{\bld{x_{i}}},\textcolor{red}{\bld{x_{j}}},\bld{x_{j+1}},\dots,\bld{x_t})
\end{equation}
Similarly, swapping $\bld{x_j}$ with the elements preceding it for $j-i$ times will result in the RHS of \eqref{eq:swap_corollary}, concluding the proof.

\end{proof}
\section*{Appendix B}
\subsection*{Parity Experiment Details}\label{appendix:parity_exp}
Both networks were trained with $1000$ randomly generated binary sequences with lengths between $2$ and $10$. For the RNN, $20$ neurons were sufficient for convergence to zero training error. We use a DeepSet with one hidden layer for the preprocessing network $\varphi$, and one hidden layer for the aggregating network $\rho$. Both the $\varphi$ and $\rho$ have a width of $100$ which was the minimal width required for convergence for the architecture used. The test set consists of $3000$ examples and was generated in a similar fashion to the train set.

\subsection*{Arithmetic Tasks on Sequences of Integers}
\label{appendix:arith}
The range of integers used is $\{0,\ldots,99\}$ for all experiments. The sum experiment was repeated twenty times, and the others three times. We report average accuracy.

Since the tasks defined are regression tasks in nature, we follow \cite{zaheer2017deep, murphy2018janossy} and use an $L_1$ loss for training. At test time, we round the output of the network to the closest integer and report accuracy using the zero-one loss. For the variance task we report mean squared error (MSE).

\subsection*{Point Cloud Experiment}
 Implementation of point-cloud experiments was based on the official repository of Set Transformers.\footnote{\href{https://github.com/juho-lee/set_transformer}{ https://github.com/juho-lee/set\_transformer}}
 We omit Set Tranformer \citep{lee2019set} from the comparison as it did not reproduce the reported results. For DeepSets and our method we use the same architectures used in the DeepSets experiments \citep{zaheer2017deep}. The preprocessing network $\varphi$ is a feed forward neural net with three hidden layers of width $256$ and \textit{TanH} activations. For the output network, $\rho$, we use a similar network with one hidden layer and add dropout with a rate of $0.5$.

In order to train SIRE we use a GRU with a single layer with width $256$ for $n=100$ and $n=1000$, for $n=5000$ we use a width of $512$. We use Adam optimizer with a learning rate of $1e^{-3}$. We apply a dropout rate of $0.75$ in the GRU layer and a batch size of $n=200$. We use a regularization coefficient of $0.1$ for all sizes. All hyperparameters were selected using cross validation. For $n=5000$ we use Truncated Back Propagation Through Time with a window of size $1000$.

\subsection*{Locally Perturbed MNIST}
\label{appendix:locally_perturbed_mnist}
In order to generate Locally Perturbed MNIST we flatten each digit to a $784$ dimensional vector. We then perform a ``convolution'' like operation with full stride. At each window we apply a random permutation. This process limits the distance of a pixel from its original position by at most the window size. We perform the above process twice with window sizes $4$ and $7$ (Figure \ref{fig:locally_perturbed_mnist}).\footnote{Resulting in an offset of at most $11$ from the pixels original location.}
\section*{Appendix C}
\subsection*{Comparison of $R_{SUB}$ and $R_{SIRE}$}
In the main text, we considered two possible regularizers: SUB and SIRE. Both had a value of zero for permutation invariant models but are otherwise different. As we argue in the main text, SIRE is expected to perform better under a given budget of samples, since it enumerates over the state space more efficiently.
In order to empirically evaluate this effect, we perform the \textit{sum} experiment using 200 training examples with sequence length 10 over $\lbrace 0,\dots,19\rbrace$. We evaluate three regularization coefficient values, $\lambda=0.001,0.01,0.1$ for each scheme. The best values on holdout are $\lambda_{SIRE}=0.1$ and $\lambda_{SUB}=0.001$.
Each experiment was repeated $5$ times, and Figure \ref{fig:convergence_rate_100} shows the results averaged over these runs.

\begin{center}
    \begin{figure}[H]
    \centering
    \includegraphics[scale=0.3]{imgs/sire_vs_sub.png}
    \caption{Loss curve of training for 1000 epochs with each regularizer.}
    \label{fig:convergence_rate_100}
\end{figure}
\end{center}
It can clearly be seen that using $R_{SIRE}$ results in faster convergence. Furthermore, the test accuracy obtained by $R_{SIRE}$ is \textbf{0.792 (0.09)} compared to an accuracy of \textbf{0.759 (0.11)} achieved by $R_{SUB}$. Thus, we conclude that in this case SIRE outperforms SUB both in convergence speed and test accuracy of the resulting model.
\bibliography{bib}